\newtheorem{theorem}{Theorem}
\newtheorem{lemma}{Lemma}
\title{Resolving Head-On Conflicts for Multi-Agent Path Finding with Conflict-Based Search}
\author{
	Lun Yang
	\affiliations
	Jilin University, China
	\emails
	yanglun17@mails.jlu.edu.cn
}
\begin{document}

\maketitle

\begin{abstract}
  Conflict-Based Search (CBS) is a popular framework for solving the Multi-Agent Path Finding problem. Some of the conflicts incur a foreseeable conflict in one or both of the children nodes when splitting on them. This paper introduces a new technique, namely the head-on technique that finds out such conflicts, so they can be processed more efficiently by resolving the conflict with the potential conflict all together in one split. The proposed technique applies to all CBS-based solvers. Experimental results show that the head-on technique improves the state-of-the-art MAPF solver CBSH.
\end{abstract}

\section{Introduction}

The Multi-Agent Path Finding (MAPF) problem is defined by a graph $G$ = ($V,E$) and a set of \textit{k} agents \textit{a}$_1$\textit{...a$_k$}, where each agent $a_i$ has a start location \textit{s$_i \in$ V} and a goal location \textit{g$_i \in$ V}. Time is discretized into timesteps, each agent can either \textit{move} to an adjacent location or \textit{wait} in its current location. Both move and wait actions incur a cost of one until the agent reaches its goal location and stays there. A \textit{path} of an agent $a_i$ is a sequence of move and wait actions that lead $a_i$ from its star location \textit{s$_i$} to its goal location \textit{g$_i$}. A tuple $\langle$$a_i$, $a_j$, $v$, $t$$\rangle$ is a \textit{vertex conflict} iff agent $a_i$ and $a_j$ occupy the same location $v$ $\in V$  at the same timestep $t$, and  A tuple $\langle$$a_i$, $a_j$, $u$, $v$, $t$$\rangle$ is an \textit{edge conflict} iff agents $a_i$ and $a_j$ travel the same edge ($u$, $v$) $\in E$ in opposite directions between timesteps $t$ - 1 and $t$. A \textit{solution} is a set of \textit{k} paths, one for each agent. Our task is to find a \textit{conflict-free} solution. Solving MAPF problem optimally has been proved to be NP-hard\cite{yu2013structure}\cite{yu2015intractability}. See \cite{felner2017search} for a survey.

\textit{Conflict-Based Search} (CBS) \cite{sharon2015conflict} is a two-level search-based optimal algorithm for MAPF which resolves conflicts by adding constraints on the involved agents. ICBS \cite{boyarski2015icbs} improved CBS with classifying and prioritizing conflicts. CBSH-CG (called \textsf{CG} here) \cite{felner2018adding} introduced an admissible heuristic for CBS high-level search for the first time by reasoning about a type of conflicts \cite{boyarski2015icbs} that increase cost to resolve. CBSH-WDG (called \textsf{WDG} here) \cite{li2019improved} purposed another admissible heuristic by reasoning about pairwise dependencies between agents. Rectangle reasoning technique \cite{li2019symmetry} identifies rectangle conflicts and resolves them efficiently in grid-based MAPF problem. Besides that a number of suboptimal CBS solvers have also been introduced \cite{Cohen2016Improved} \cite{barer2014suboptimal}. 

In this paper, we introduce a new way of reasoning about head-on conflicts for CBS-based MAPF solvers. Existing CBS-based solvers resolve head-on conflicts by making one of the involved agents wait for one timestep before the conflict, which will result in another predictable conflict. To this end, we resolve them in one split to improve the performance of CBS-based solvers.
\section{Background}

\subsection{Conflict-Based Search (CBS)}

CBS has two levels. The high level of CBS searches a binary \textit{conflict tree} (CT). Each CT node \textit{N} contains: \textbf{(1)} a set of constraints (\textit{N.constraints}) imposed on agents., where each constraint is either a \textit{vertex constraint} $\langle$$a_i$,  $v$, $t$$\rangle$ that prohibits agent $a_i$ from being at location $v$ at timestep $t$ or an \textit{edge constraint} $\langle$$a_i$, $u$, $v$, $t$$\rangle$ that prohibits agent $a_i$ from moving from location $u$ to $v$ between timesteps $t$-1 and $t$; \textbf{(2)} a solution (\textit{N.solution}) that satisfies \textit{N}.constraints; and \textbf{(3)} the cost of solution (\textit{N.cost}), which is the sum of the costs of all paths. The root CT node contains an empty set of constraints. The high level performs a best-first search according to the costs of CT nodes. 

When CBS chooses a CT node \textit{N} to expand, it checks \textit{N.solution} for conflicts. If there are none, CBS terminates, declares \textit{N} the goal node and returns \textit{N.solution}. Otherwise, CBS chooses a conflict randomly to resolve, by \textit{splitting N} into two child CT nodes. In each child CT node, an additional constraint is added to prohibit one of the involved agents from occupying the location $v$ or traversing the edge ($u$,$v$)at $t$. The path of the agent must be replanned by a low-level search since it no longer satisfies the constraints of the child CT node. By considering both ways to resolve each conflict, CBS guarantees optimality.

\subsection{Improved CBS (ICBS)}

CBS arbitrarily chooses conflicts to split on, while the order of resolving conflicts can significantly affect the size of the CT and thus the runtime. \textit{Improved CBS}(ICBS)\cite{boyarski2015icbs} addresses this issue by classifying and prioritizing conflicts. ICBS chooses conflicts in order of: \textbf{(1)} \textit{cardinal} conflicts, a conflict is cardinal iff when CBS splits a CT node \textit{N} on it, the cost of each child CT node of \textit{N} is larger than \textit{N.cost}.  \textbf{(2)} \textit{semi-cardinal} conflicts, a conflict is semi-cardinal iff the cost of one child CT node of \textit{N} is larger than \textit{N.cost}
and the cost of the other is equal to \textit{N.cost}. \textbf{(3)} \textit{non-cardinal } conflicts, a conflict is non-cardinal iff the cost of each child CT node of \textit{N} is equal to \textit{N.cost}.

ICBS uses MDDs to classify conflicts. The \textit{multi-valued decision diagram}(MDD)  \cite{sharon2013increasing} is a directed acyclic graph that  compactly stores all possible paths of given cost \textit{c} for a given agent $a_i$ while only considering the constraints imposed on $a_i$. Nodes at depth $t$ of $a_i$'s MDD are all possible locations along the optimal paths that cost \textit{c} at timestep $t$. If there is only one node $\langle v, t\rangle$ at depth $t$ it is called a \textit{singleton}, which means all optimal paths of that agent must be at $v$ at timestep $t$. Cardinal conflicts happen between two singletons, semi-cardinal conflicts involve only one. Figure 1 gives an example, the vertex conflict $\langle a_1, a_2, B2, 3 \rangle$  between (b) and (c) is cardinal since $\langle B2, 3\rangle$ is a singleton in both MDDs. The edge conflict $\langle a_1, a_2, B2, B3$, 3$\rangle$ between (a) and (b) is cardinal since $\langle B2, 2\rangle$, $\langle B3, 3\rangle$ are singletons in $a_1$'s MDD and $\langle B3, 2\rangle$, $\langle B2, 3\rangle$ are singletons in $a_2$'s MDD. 

\begin{figure}
	\includegraphics[scale=0.26]{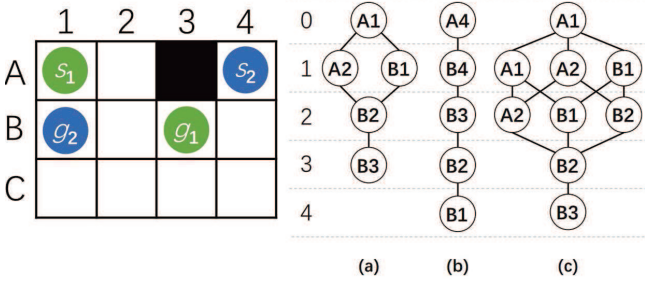}
	\caption{An example of head-on conflicts and both agents' MDD. (a) and (b) are MDDs of $a_i$ and $a_j$ before the CBS split, (c) is the MDD of $a_i$ after the CBS split.}
\end{figure}

\section{IDENTIFY HEAD-ON CONFLICTS}
In this section, we further classify cardinal conflicts into three types. Like classifying the cardinal and semi-cardinal conflicts needs to check the MDD width of both agents separately, we reason about the involved agents one at a time.
\subsection{Definition of Head-on} 
When classifying a conflict $\langle$$a_i$, $a_j$, $v$, $t$$\rangle$$/$ $\langle$$a_i$, $a_j$, $u$, $v$, $t$$\rangle$, we say the conflict is \textit{head-on} for $a_i$ iff it satisfies the following conditions:
\begin{enumerate}
	\item The conflict is cardinal.
	\item The conflict occurs before both agents reach their goal locations, and if the conflict is a vertex conflict, $a_i$'s MDD has a singleton $\langle v_{pre}, t-1\rangle$ and $a_j$'s MDD has a singleton $\langle v_{pre}, t+1\rangle$ where $v_{pre}$ is $a_i$'s location at timestep $t-1$ in its optimal path before the split.
	\item After adding the constraints \{$\langle$$a_i$, $v$, $t$$\rangle$, $\langle$$a_i$, $v_{pre}$, $v$, $t + 1$$\rangle$\}$/$ \{$\langle$$a_i$, $u$, $v$, $t$$\rangle$, $\langle$$a_i$, $u$, $t$$\rangle$\} the length of $a_i$'s path is larger than $l + 1$ (where $v_{pre}$ is $a_i$'s location at timestep $t - 1$ in $N$.solution, $l$ is the length of $a_i$'s path before the split.).
\end{enumerate}

Condition(1) and condition(2) intend to make sure that the optimal paths of the two agents have a conflict on a little ``footlog'', which means the conflict can't be resolved if we only make $a_i$ wait for on timestep before the conflict. We use Condition(2) to check whether the two agents are heading in opposite directions and whether the conflict is  ``surrounded'' by singletons. There are extra conditions for vertex conflicts because cardinal edge conflicts already suit such requirements. Condition(3) is imposed for excluding the influence of existing constraints. With all these three conditions, now we can say there will be another cardinal conflict after resolving the first one for sure. 

\begin{theorem} \label{theorem1}
	Suppose that a CT node N chooses to resolve a vertex/ edge conflict $\langle$$a_i$, $a_j$, $v$, $t$$\rangle$$/$ $\langle$$a_i$, $a_j$, $u$, $v$, $t$$\rangle$ that is head-on for $a_i$, and the child CT node of N, N$_c$ (with an additional constraint  $\langle$$a_i$, $v$, $t$$\rangle$$/$ $\langle$$a_i$, $u$, $v$, $t$$\rangle$) has a solution, then N$_i$ has a cardinal edge/ vertex conflict $\langle$$a_i$, $a_j$, $v_{pre}$, $v$, $t + 1$$\rangle$$/$ $\langle$$a_i$, $a_j$, $u$, $t$$\rangle$.
\end{theorem}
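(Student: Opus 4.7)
The plan is to reduce everything to an MDD analysis in $N_c$, exploiting the fact that the new constraint is imposed only on $a_i$, so $a_j$'s MDD in $N_c$ is identical to its MDD in $N$ and inherits every singleton the hypotheses give us. The task then becomes identifying the required singletons in $a_i$'s MDD in $N_c$; the cardinality of the claimed new conflict will follow by matching these against $a_j$'s.

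For the vertex subcase I would first pin down the length of $a_i$'s new optimal path. By the cardinal assumption it is at least $l+1$. For the matching upper bound I would start from an optimal path of $a_i$ in $N$, which by head-on condition~(2) must be at $v_{pre}$ at timestep $t-1$ and, by cardinality, at $v$ at timestep $t$; then insert one wait at $v_{pre}$ between timesteps $t-1$ and $t$ and shift the remainder of the path by one. This yields a length-$(l+1)$ candidate in $N_c$; that its shifted tail still satisfies all of $N$'s inherited constraints is the delicate point, and the singleton structure of $a_i$'s MDD in $N$ is what makes it go through. Once the new optimum equals $l+1$, I would invoke condition~(3): if some length-$(l+1)$ path in $N_c$ avoided the edge $(v_{pre}, v)$ between $t$ and $t+1$, it would also be feasible after adding the second constraint $\langle a_i, v_{pre}, v, t+1\rangle$, contradicting~(3). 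Therefore every such optimal path traverses that edge, making $(v_{pre}, t)$ and $(v, t+1)$ singletons in $a_i$'s new MDD. Combined with $a_j$'s singletons at $(v, t)$ (from cardinality) and at $(v_{pre}, t+1)$ (from condition~(2)), this gives the cardinal edge conflict $\langle a_i, a_j, v_{pre}, v, t+1\rangle$.

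The edge subcase follows the same template. The new constraint is $\langle a_i, u, v, t\rangle$; the cardinal hypothesis again forces $a_i$'s new optimum to at least $l+1$, and inserting one wait at $u$ at timestep $t$ so that $a_i$ crosses the edge one step later certifies the matching upper bound. The edge-case version of condition~(3) now rules out any length-$(l+1)$ path that does not occupy vertex $u$ at timestep $t$, so $(u, t)$ is a singleton in $a_i$'s new MDD. Since $a_j$'s MDD already carries $(u, t)$ as a singleton from the original cardinal edge conflict, the vertex conflict $\langle a_i, a_j, u, t\rangle$ in $N_c$ is cardinal.

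The step I expect to be the main obstacle is the upper bound on $a_i$'s new optimum in $N_c$. Condition~(3) constrains what happens after a second constraint is added, not what happens inside $N_c$ itself, so it gives no direct information on the size of the new MDD; the wait-insertion construction must be justified as yielding a plan that respects every constraint $N_c$ inherits from $N$, and that is where the singleton structure of $a_i$'s MDD in $N$ has to be used carefully to control the shifted portion of the path. Once the new optimum has been pinned at exactly $l+1$, the remainder of the argument is a straightforward singleton-matching verification that proceeds symmetrically in the vertex and edge subcases.
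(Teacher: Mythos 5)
Your overall route is the same as the paper's: treat the two constraints of condition (3) as being added one at a time, pin $a_i$'s new optimal cost in $N_c$ at exactly $l+1$, use condition (3) to force the relevant nodes to be singletons of $a_i$'s new MDD (any cost-$(l+1)$ path avoiding them would survive the second constraint, contradicting condition (3)), and then match these against $a_j$'s unchanged singletons coming from cardinality and condition (2). Your phrasing of the singleton step is in fact cleaner than the paper's two-case analysis (node absent from the new MDD vs.\ node having siblings at its depth). The one place you diverge is the upper bound $l+1$ on the new optimum, and that is also where your argument has a real hole: inserting a wait at $v_{pre}$ (resp.\ $u$) and shifting the tail of an optimal path of $N$ by one timestep can violate a constraint of $N$ imposed on $a_i$ at a later timestep, and the singleton structure of $a_i$'s MDD around timesteps $t-1$ and $t$ says nothing about those later levels, so it cannot be ``what makes it go through.'' The paper does not construct anything here; it simply invokes the result of \cite{li2019improved} that a single vertex or edge constraint placed before the agent reaches its goal increases the optimal cost by at most one (this is precisely why condition (2) requires the conflict to occur before both agents reach their goals), which together with cardinality pins the new cost at exactly $l+1$. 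If you replace your wait-insertion construction with that cited lemma (or reprove it properly, which requires more than shifting a single path), the rest of your argument goes through and coincides with the paper's.
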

\begin{proof}
	We prove the theorem by adding the two constraints separately, so there will be two replannings in this process. 
	After adding the first constraints $\langle$$a_i$, $v$, $t$$\rangle$$/$ $\langle$$a_i$, $u$, $v$, $t$$\rangle$, \{$\langle v_{pre},t\rangle$, $\langle v,t + 1\rangle$\}$/$ $\langle u,t\rangle$ are/is singleton(s) in the new MDD, otherwise there will be two situations: the nodes are not in the new MDD or there are other nodes at the same depth in the new MDD. The conflict occurs before both agents reach their goal locations, in such cases \cite{li2019improved} has proven that $N_i$.cost $\in$ \{$N$.cost, $N$.cost + 1\}. So the length of the replanned path for $a_i$ is $l$ or $l + 1$. Since the conflict is cardinal the length of the first replanned path must be $l + 1$ and so be the depth of the new MDD. In the first situation, the second constraint can't affect the second replanning, so the length of $a_i$'s path is still $l+1$ contradict the condition(3). In the second situation, there are other edges/vertexes at the same depth which means after adding the second constraint there are other options to reach the goal with the same timesteps $l + 1$. In other words, the second constraint can't increase the length of $a_i$'s path, therefore both situations contradict the assumption.
	
	If the conflict is a vertex conflict, $\langle v_{pre} , t \rangle$ and $\langle v, t + 1\rangle$ are singletons in $a_i$ 's new MDD, $\langle v , t \rangle$ and $\langle v_{pre}, t + 1 \rangle$ are still singletons in $a_j$'s MDD in $N_i$ according to condition(2). So $N_i$ has a cardinal edge conflict $\langle$$a_i$, $a_j$, $v_{pre}$, $v$, $t + 1$$\rangle$.
	If the conflict is an edge conflict, $\langle u, t\rangle$ is a singleton in both agent's MDD in $N_i$, Therefore $N_i$ has a cardinal vertex conflict $\langle$$a_i$, $a_j$, $u$, $t$$\rangle$.
\end{proof}

\begin{figure}
	\includegraphics[scale=0.3]{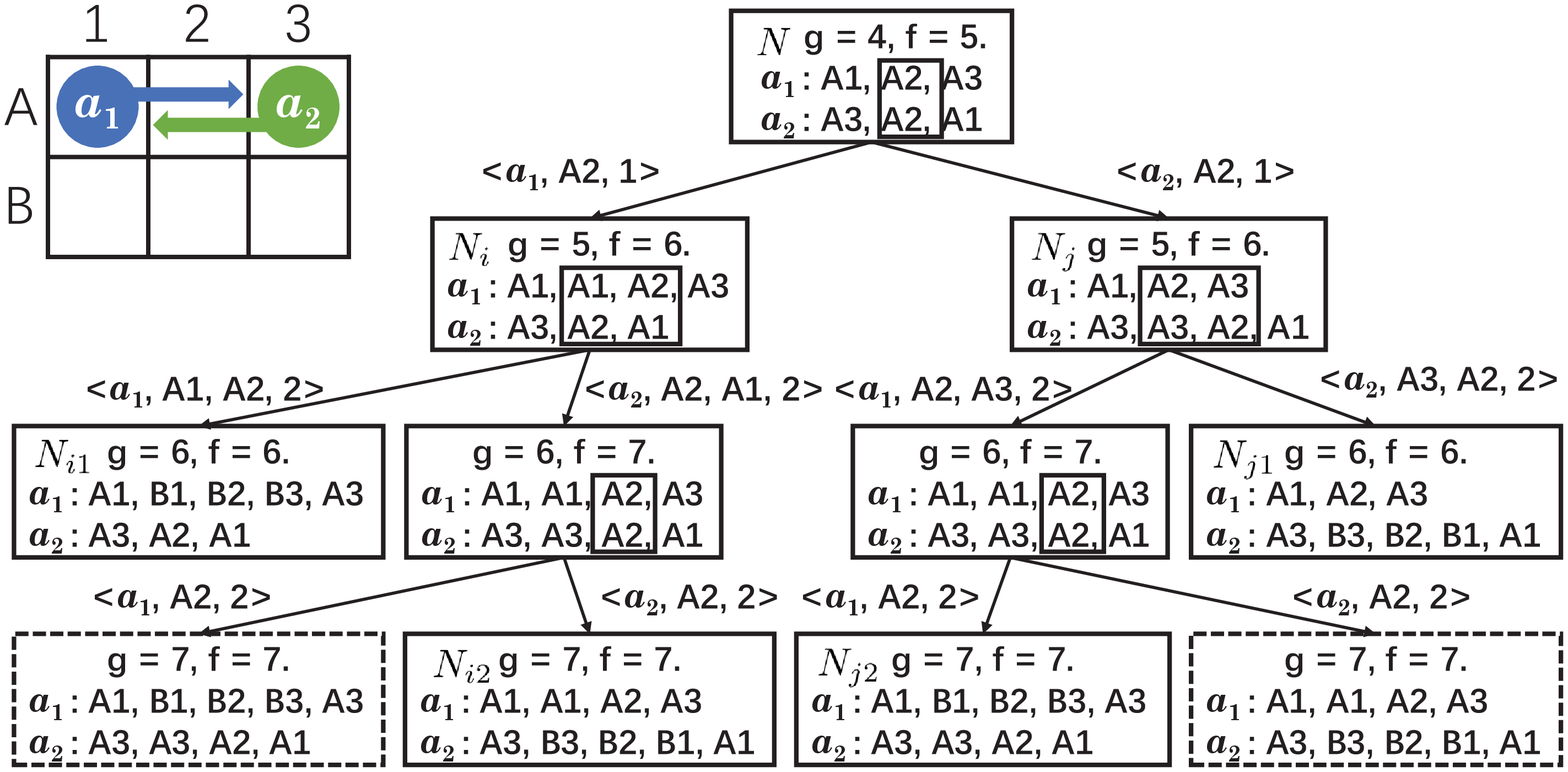}
	\caption{A MAPF instance and its CT.}
\end{figure}

For better understanding we give an example shown in Figure 1. $\langle a_1, a_2, B2, B3, 3\rangle$ is a cardinal edge conflict. $\langle B2, 2\rangle$ is a singleton in $a_1$'s MDD,  After adding the constraint $\langle a_1 ,B2, B3, 3\rangle$, $\langle B2, 2\rangle$ is delayed to $\langle B2, 3\rangle$ in the MDD shown in Figure 1(c). $\langle B2, 3\rangle$ is a singleton in $a_2$'s MDD, which resulting in a cardinal vertex conflict $\langle a_1, a_2, B2, 3\rangle$. In Figure 1. $a_1$ and $a_2$ have a vertex conflict $\langle a_1, a_2, A2, 1\rangle$ that is head-on for both  agents. The location $A1$ is $v_{pre}$ for  $a_i$. After adding the constraint $\langle a_1 ,A2, 1\rangle$ there are singletons $\langle A1, 1\rangle$ and $\langle A2, 2\rangle$ in $a_i$'s new MDD, singletons $\langle A2, 1\rangle$ and $\langle A1, 2\rangle$ remain in $a_2$'s MDD. So there must be a cardinal edge conflict $\langle a_1 , a_2, A1, A2, 2\rangle$.

After the definition of head-on, we formally define head-on conflicts.

\subsection{Definition of Head-on Conflicts}

\paragraph{Head-on conflict.}A cardinal conflict is head-on for a CT node \textit{N} iff it is head-on for both involved agents. When CBS splits on a head-on conflict of \textit{N}, both children nodes will have a known cardinal conflict. For example, in Figure 2 the conflict $\langle a_1, a_2, A2, 1\rangle$ is a head-on conflict for the root node. Both children node of the root node have a cardinal conflict $\langle a_1, a_2, A1, A2, 2\rangle$ or $\langle a_1, a_2, A2, A3, 2\rangle$ that is predictable before the split action.

\paragraph{Semi-head-on conflict.}A cardinal conflict is semi-head-on for a CT node \textit{N} iff it is head-on for one of the involved agents. When CBS splits on a semi-head-on conflict of \textit{N}, one of the children nodes will have a known cardinal conflict. For example, in Figure 3(a) the conflict $\langle a_1, a_2, B2, 1\rangle$ is a semi-head-on conflict for the root node. The child node with the new constraint $\langle a_1, B2, 1\rangle$ have a cardinal conflict $\langle a_1, a_2, B1, B2, 2\rangle$ that is predictable before the split action.

\paragraph{Non-head-on conflict.}A cardinal conflict is non-head-on for a CT node \textit{N} iff it is not head-on for both involved agents. For example, in Figure 3(b) the conflict $\langle a_1, a_2, B2, 1\rangle$ is a non-head-on conflict for the root node. There are no predictable conflicts can be found with head-on technique in both of its children nodes.

When using the head-on technique, head-on conflicts are given the highest priority to resolve, and then the semi-head-on, at last the ordinary cardinal conflict and the rest is identical to ICBS.

\subsection{A Technique to Accelerate Identification}
The condition (3) in the definition of head-on uses the low-level planner to identify head-on conflicts. Actually, we don't have to do that for all agents that satisfy the condition(1) and (2), but only those who already have constraints on them.  Now we explain why:

Paths can be seen as a sequence of actions that lead the agent from its start location to its goal location. In a 4-neighbor grid map there are five kinds of actions: \textit{left, right, up, down} and \textit{wait}, we use  $n_{left}$, $n_{right}$, $n_{up}$, $n_{down}$ and $n_{wait}$ to denote the number of an action in the original path, $n_{left}^\prime$, $n_{right}^\prime$, $n_{up}^\prime$, $n_{down}^\prime$ and $n_{wait}^\prime$ to denote the number of an action in the replanned path respectively. The horizontal distance between the start location and goal location is equal to $|n_{left} - n_{right}|$ and the vertical distance is equal to $|n_{up} - n_{down}|$. Obviously the optimal path without constraints has no wait actions. As proved in Theorem 1 the length of $a_i$'s replanned path with the first constraint is $l + 1$. So the replanned path has only one wait action. Apparently one constraint can't create more than one wait actions, we assume the only opposite that there exists a replanned path that has no wait actions with a length of $l+1$, then we have:

\begin{equation}
\begin{aligned}
|n_{left} - n_{right}|= |n_{left}^\prime - n_{right}^\prime|
\end{aligned}\end{equation}
\begin{equation}
\begin{aligned}
|n_{up} - n_{down}|= |n_{up}^\prime - n_{down}^\prime|
\end{aligned}\end{equation}\begin{equation}
\begin{aligned}
n_{left} + n_{right} + n_{up} + n_{down} + + n_{wait}= l
\end{aligned}\end{equation}\begin{equation}
\begin{aligned}
n_{left}^\prime + n_{right}^\prime + n_{up}^\prime + n_{down}^\prime + n_{wait}^\prime = l + 1
\end{aligned}\end{equation}

$n_{wait}$ and $n_{wait}^\prime$ are both 0 here. We use  $\Delta_{left}$, $\Delta_{right}$, $\Delta_{up}$, $\Delta_{down}$ to denote the differences of number of actions after replan. So we have:
\begin{equation}
\begin{aligned}
\Delta_{left}+ \Delta_{right}+ \Delta_{up}+ \Delta_{down} =  1
\end{aligned}\end{equation}
\begin{figure}
	\centering{\includegraphics[scale=0.2]{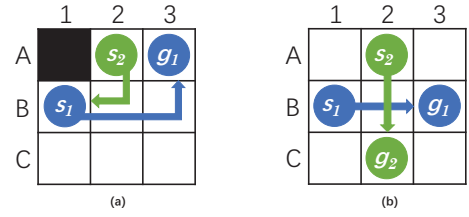}}
	\caption{Examples of semi-head-on conflicts and non-head-on conflicts.}
\end{figure}

From Equation (1) and (2) we know that $\Delta_{left} - \Delta_{right}$ is 0 or -2($n_{left} - n_{right}$) which is even,  $\Delta_{up} - \Delta_{down}$ is 0 or -2$(n_{up} - n_{down})$ which is even as well, then we have:		
\begin{equation}
\begin{aligned}
\Delta_{left} - \Delta_{right} + 2\Delta_{right} + \Delta_{up} - \Delta_{down} + 2\Delta_{down} = 1
\end{aligned}\end{equation}

All subitems on the left side of Equation(6) are even, the right side is odd, therefore the equation doesn't hold, neither the assumption. So the replanned path has only one wait action.

Wait action can't change the location, so the left replanning problem is identical to the original planning, that is finding a sequence of $l$ actions from left, right, up and down four actions that connects the start and the goal location, so they have the same set of resulting paths, the only difference is that there will be a wait action somewhere in the sequences before timestep $t$ for satisfying the constraint. By assumption both $\langle v_{pre} , t - 1 \rangle$$/$ $\langle u, t - 1 \rangle$ and $\langle v, t\rangle$ are singletons, which means all paths must traverse $v_{pre}$ at timestep $t - 1$ and $v$ at timestep $t$. After the insertion, the singletons are delayed to $\langle v_{pre} , t \rangle$$/$ $\langle u, t \rangle$ and $\langle v, t + 1\rangle$ with the first constraint. That leads to the same conclusion as condition(3), so there are no constraints on the agent can be seen as a much cheaper substitute for condition (3).

The example shown in Figure 1 still works here. There are no constraints on $a_1$ before, after adding the constraint $\langle a_1, B2, B2, 3\rangle$ the singletons $\langle B2, 2\rangle$ and $\langle B3, 3\rangle$ in Figure 1(a) are delayed to singletons $\langle B2, 3\rangle$ and $\langle B3, 4\rangle$ in Figure 1(c).

\section{Resolving Head-on Conflicts}

  As mentioned above, the head-on technique works by resolving original conflicts and those predictable conflicts in one split, to reduce the depth of the goal nodes and the size of CT. In other words, our goal is to make sure all children nodes are free from known conflicts without losing any conflict-free solutions. To that purpose, we handle vertex and edge head-on conflicts differently.  

\subsection{Semi-head-on Edge Conflicts} 
When a CT node $N$ split on a semi-head-on edge conflict $\langle a_i, a_j, u, v, t\rangle$ which is head-on for $a_i$, the head-on technique adds constraints $\langle a_i, u, v, t\rangle$, $\langle a_i, u, t\rangle$ to one child CT node $N_i$ and $\langle a_j, v, u, t\rangle$ to the other. In another word we bring forward the constraint $\langle a_i, u, t\rangle$ that is going to be added somewhere in the subtree of the node $N_i$. Both $N$'s children nodes are free from the conflicts  $\langle a_i, a_j, u, v, t\rangle$ and $\langle a_i, a_j, v, t\rangle$.
\begin{figure*}
	\centering
	\includegraphics[scale=0.62]{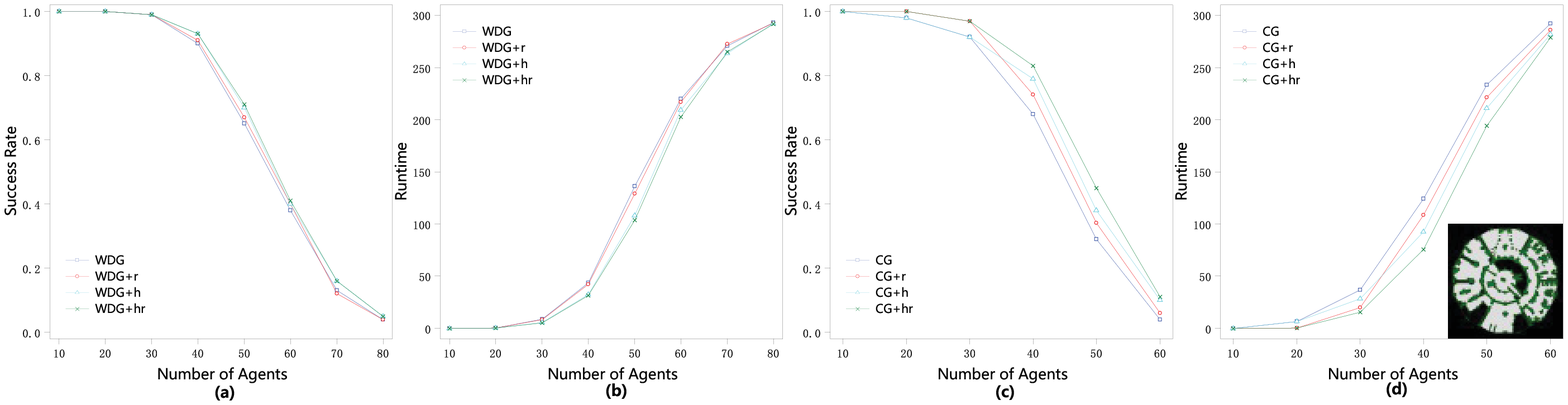}
	\caption{Results of \text{WDG} and \textsf{CG} with different enhancements on \textit{lak503d}.}
\end{figure*}

\begin{figure*}
	\centering
	\includegraphics[scale=0.62]{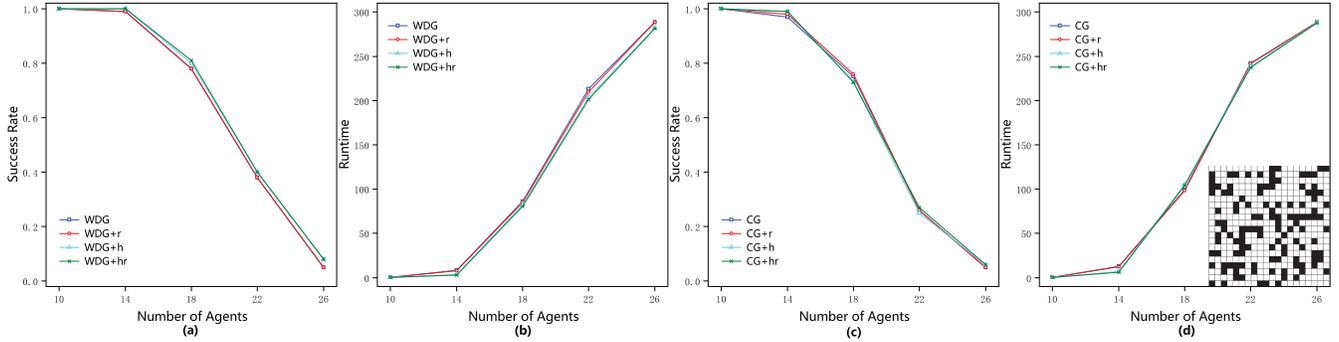}
	\caption{Results of \text{WDG} and \textsf{CG} with different enhancements on 20$\times$20 grids with 30$\%$ obstacles. The blue lines in (a) and (d) are hidden by the red lines.}
\end{figure*}
\begin{lemma}
	If two agents have an edge conflict $\langle a_i, a_j, u, v, t\rangle$, then there are no pairs of conflict-free paths for the two agents that violate both sets of constraints $C_1$ = \{$\langle a_i, u, v, t\rangle$, $\langle a_i, u, t\rangle$\} and $C_2$ = \{$\langle a_j, v, u, t\rangle$\}.
\end{lemma}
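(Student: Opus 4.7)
The approach is a straightforward case analysis on which constraint of $C_1$ is violated. The plan is to assume, toward proving the lemma, that we have a pair of paths $(\pi_i, \pi_j)$ for $a_i$ and $a_j$ that simultaneously violate both $C_1$ and $C_2$, and then exhibit an explicit conflict between $\pi_i$ and $\pi_j$.

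First I would unpack $C_2$: the only way to violate $C_2=\{\langle a_j,v,u,t\rangle\}$ is for $a_j$ to traverse the edge $(v,u)$ between timesteps $t-1$ and $t$. In particular, $a_j$ is at $v$ at timestep $t-1$ and at $u$ at timestep $t$ in $\pi_j$. This pins down $a_j$'s location at two critical timesteps regardless of how $C_1$ is violated.

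Next I would branch on which element of $C_1=\{\langle a_i,u,v,t\rangle,\langle a_i,u,t\rangle\}$ is violated by $\pi_i$ (at least one must be, by assumption). In the first case, $a_i$ traverses the edge $(u,v)$ between $t-1$ and $t$, which together with the $C_2$ violation by $a_j$ is exactly the original edge conflict $\langle a_i,a_j,u,v,t\rangle$ on $(u,v)$ between the same timesteps. In the second case, $a_i$ is at vertex $u$ at timestep $t$; but from the $C_2$ analysis $a_j$ is also at $u$ at timestep $t$, giving the vertex conflict $\langle a_i,a_j,u,t\rangle$. Either branch contradicts the assumption that $\pi_i,\pi_j$ are conflict-free, which proves the lemma.

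I do not expect a genuine obstacle here: the lemma is essentially a tautology once one observes that bringing forward the vertex constraint $\langle a_i,u,t\rangle$ into $C_1$ precisely closes the ``swap'' loophole that a bare edge constraint would leave open. The only thing to be slightly careful about is stating clearly that ``violates $C_1$'' is interpreted as ``violates at least one element of $C_1$'' (so the case split is exhaustive), and that the edge conflict in the first case is the same physical swap event as the one witnessed by $a_j$ via the $C_2$ violation, so the two agents' behaviors genuinely collide rather than merely each breaking a constraint in isolation.
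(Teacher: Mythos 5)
Your proposal is correct and follows essentially the same argument as the paper: pin down $a_j$'s positions at timesteps $t-1$ and $t$ from the violation of $C_2$, then case-split on which constraint of $C_1$ is violated, obtaining the edge conflict $\langle a_i,a_j,u,v,t\rangle$ in one case and the vertex conflict $\langle a_i,a_j,u,t\rangle$ in the other. Your write-up is in fact cleaner, since the paper's version contains a typo in describing $a_i$'s positions for the first case.
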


\begin{proof}
	If there is such a pair of conflict-free paths, then $a_j$ must be at $v$ at timestep $t-1$ and at $u$ at timestep $t$ since it violates $C_2$. It also needs to make sure $a_i$ be at $u$ at timestep $t-1$ and at $a_j$ at timestep $v$, or be at $u$ at timestep $t$ for the same reason. Both situations described above end up with a conflict, which contradicts the assumption. 
\end{proof}

\subsection{Head-on Edge Conflicts} 
When a CT node $N$ split on a head-on edge conflict $\langle a_i, a_j, u, v, t\rangle$, the head-on technique adds constraints $\langle a_i, u, v, t\rangle$, $\langle a_i, u, v, t + 1\rangle$ to one child CT node and $\langle a_j, v, u, t\rangle$, $\langle a_j, v, u, t + 1\rangle$ to the other. We can't handle head-on edge conflicts the same way as semi-head-on conflicts because there is a possibility that both agents wait right before the conflict happens. To this end, we ``delay'' the second constraints to include that possibility to the search spaces of both children nodes' subtree. Both $N$'s children nodes are free from the conflicts $\langle a_i, a_j, u, v, t\rangle$, $\langle a_i, a_j, u, t\rangle$ and $\langle a_i, a_j, u, t\rangle$.

\begin{lemma}
	If two agents have an edge conflict $\langle a_i, a_j, u, v, t\rangle$, then there are no pairs of conflict-free paths for the two agents that violate both sets of constraints $C_1$ = \{$\langle a_i, u, v, t\rangle$, $\langle a_i, u, v, t + 1\rangle$\} and $C_2$ = \{$\langle a_j, v, u, t\rangle$, $\langle a_j, v, u, t + 1\rangle$\}.
\end{lemma}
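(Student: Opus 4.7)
The plan is to mimic the structure of the Lemma~1 proof but with twice as many cases, since each of $C_1$ and $C_2$ now contains two edge constraints. First I would unpack what it means for each agent to violate its set: $a_i$ violates $C_1$ iff either (A) it traverses $(u,v)$ from $u$ at time $t-1$ to $v$ at time $t$, or (B) it traverses $(u,v)$ from $u$ at time $t$ to $v$ at time $t+1$; symmetrically, $a_j$ violates $C_2$ iff either (X) it traverses $(v,u)$ between timesteps $t-1$ and $t$, or (Y) it traverses $(v,u)$ between timesteps $t$ and $t+1$. In each of these scenarios the agent's position at the two relevant consecutive timesteps is uniquely pinned down.

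Assuming for contradiction that a conflict-free pair of paths violates both sets, I would then examine the four combinations (A,X), (A,Y), (B,X), (B,Y). Case (A,X) reproduces the original edge conflict $\langle a_i,a_j,u,v,t\rangle$, because the agents swap across $(u,v)$ between $t-1$ and $t$; case (A,Y) places both agents at $v$ at time $t$, giving a vertex conflict; case (B,X) places both at $u$ at time $t$, again a vertex conflict; and case (B,Y) yields the edge conflict $\langle a_i,a_j,u,v,t+1\rangle$. Each case contradicts the assumed conflict-freeness, completing the argument.

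The ``hard part'' is really just careful bookkeeping: making sure I do not silently drop any of the four A/B by X/Y combinations, and that I correctly read off both agents' positions at timesteps $t-1$, $t$, and $t+1$ in each case. No deeper machinery seems necessary here---no MDD-width reasoning, no cost-increase argument, and no appeal to Theorem~\ref{theorem1}---since once the case split is laid out, each sub-case is a one-line consequence of the definitions of vertex and edge conflict given in the introduction.
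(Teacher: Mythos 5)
Your proposal is correct and matches the paper's own proof essentially verbatim: both enumerate the four pairings of one violated constraint from $C_1$ with one from $C_2$ and read off, respectively, the edge conflict at $t$, a vertex conflict at $v$ at time $t$, a vertex conflict at $u$ at time $t$, and the edge conflict at $t+1$. No further commentary is needed.
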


\begin{proof}
We are going to analyze all combinations of constraints between $C_1$ and $C_2$. Violating constraints $\langle a_i, u, v, t\rangle$ and $\langle a_j, v, u, t\rangle$ at the same time results in a conflict $\langle a_i, a_j, u, v, t\rangle$. Violating constraints $\langle a_i, u, v, t\rangle$ and $\langle a_j, v, u, t + 1\rangle$ at the same time results in a conflict $\langle a_i, a_j, v, t\rangle$. Violating constraints $\langle a_i, u, v, t + 1\rangle$ and $\langle a_j, v, u, t\rangle$ at the same time results in a conflict $\langle a_i, a_j, u, t\rangle$. Violating constraints $\langle a_i, u, v, t+1\rangle$ and $\langle a_j, v, u, t+1\rangle$ at the same time results in a conflict $\langle a_i, a_j, u, v, t+1\rangle$. Therefore all combinations of constraints between $C_1$ and $C_2$ can't be violated without conflicts and the lemma is proven. 
\end{proof}

\subsection{Head-on and Semi-head-on Vertex Conflicts} 
A head-on or semi-head-on vertex conflict $\langle a_i, a_j, v, t\rangle$ is handled as follow: if the conflict is head-on for $a_i$ we add constraints $\langle a_i, v, t\rangle$, $\langle a_i, v_{pre}, v, t+1\rangle$ to the child CT node $N_i1$, and an additional child CT node $N_i2$ with new constraints $\langle a_i, v, t\rangle$, $\langle a_j, v, v_{pre}, t+1\rangle$ $\langle a_j, v, t+1\rangle$. If the conflict is not head-on for $a_i$, then we add a child CT node $N_i$ with new constraint $\langle a_i, v, t\rangle$, the same as original CBS does. So splitting on a head-on vertex conflict generates four children nodes, generates three if it is a semi-head-on vertex conflict. All of $N$'s children nodes are free from the conflicts between the two agents around $v$.

Now we explain the reason why we add additional nodes. Figure1 gives a good example on this issue. There are six CT nodes that come out of resolving the head-on vertex conflict $\langle a_1, a_2, A2, 1\rangle$ and its predictable following conflicts. Only the cost of $N_i$ and $N_j$ is 6, while all the other four nodes' cost is 7. Just like resolving head-on edge conflict directly will leave a possibility behind, there are two possibilities corresponding the two constraint combinations. One is $a_i$ is at $v_{pre}$ at timestep $t$ and at $v$ at timestep $t+1$, $a_j$ is at $v$ at timestep $t$, the other is $a_j$ is at $v_{post}$ at timestep $t$ and at $v$ at timestep $t+1$, $a_i$ is at $v$ at timestep $t$($v_{post}$ is $a_j$'s location at timestep $t-1$ before the split, corresponding to $v_{pre}$). These two possibilities can't be covered by delaying the second constraint. In the CT of original CBS, the two possibilities are considered in $N_{i2}$ and $N_{j2}$ as shown in Figure 1. So due to completeness, we add an additional CT node whenever the vertex conflict we resolve is head-on for an involved agent. The additional nodes mean to take those possibilities into consideration.

To sum up, if the vertex conflict is semi-head-on and only head-on for one agent let's say $a_i$, then we generate three children node $N_{i1}$, $N_{i2}$, and $N_{j}$, if it is head-on then we generate four children node $N_{i1}$, $N_{i2}$, $N_{j1}$ and $N_{j2}$.

\begin{lemma}
	If twe agents have a vertex conflict $\langle a_i, a_j, v, t\rangle$ that is head-on for $a_i$, then there are no pairs of conflict-free paths for the two agents that violate all three sets of constraints $C_1$ = \{$\langle a_i, v, t\rangle$, $\langle a_i, v_{pre}, v, t + 1\rangle$\}, $C_2$ = \{$\langle a_j, v, t\rangle$\} and $C_3$ = \{$\langle a_i, v, t\rangle$, $\langle a_j, v, v_{pre}, t + 1\rangle$, $\langle a_j, v, t + 1\rangle$\}.
\end{lemma}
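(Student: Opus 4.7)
The plan is to mimic the case-analysis template used in Lemmas~1 and 2: assume for contradiction a pair of conflict-free paths for $a_i$ and $a_j$ that simultaneously violates $C_1$, $C_2$, and $C_3$, and then pin down each agent's position at timesteps $t$ and $t+1$ from the forced violations until a conflict is extracted. Since every $C_k$ must have at least one of its constraints broken, I would walk through the $C_k$'s in order $C_2, C_1, C_3$, letting the already-fixed locations prune the later cases.

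The first move is to exploit $C_2$: its sole constraint $\langle a_j, v, t\rangle$ is violated, so $a_j$ sits at $v$ at time $t$. Turning to $C_1$, the constraint $\langle a_i, v, t\rangle$ cannot be the violated one, because that would put $a_i$ at $v$ at time $t$ alongside $a_j$ and contradict conflict-freeness; hence the edge constraint $\langle a_i, v_{pre}, v, t+1\rangle$ must be the violated one, forcing $a_i$ to traverse $v_{pre}\to v$ between $t$ and $t+1$. In particular $a_i$ is at $v$ at time $t+1$.

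For $C_3$ I would enumerate its three constraints. Violating $\langle a_i, v, t\rangle$ again contradicts conflict-freeness via $a_j$'s position at time $t$. Violating $\langle a_j, v, t+1\rangle$ places $a_j$ at $v$ at time $t+1$, conflicting with $a_i$'s newly-forced position there. The only remaining option is violating the edge constraint $\langle a_j, v, v_{pre}, t+1\rangle$, which sends $a_j$ along $v\to v_{pre}$ between $t$ and $t+1$; combined with $a_i$'s traversal $v_{pre}\to v$ over the same timesteps, this is a head-on edge conflict on $(v_{pre},v)$. Every case closes with a conflict, contradicting the assumption.

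The main obstacle is purely bookkeeping: $C_3$ has three constraints, so the case tree naively looks as if it has $2\times 1 \times 3=6$ leaves. The organising observation that keeps the argument short is that the shared vertex constraint $\langle a_i, v, t\rangle$ appearing in both $C_1$ and $C_3$ is eliminated by a single appeal to $C_2$, after which only the edge-constraint branch of $C_3$ is genuinely available and it lines up exactly with the forced traversal coming from $C_1$. No new ideas beyond those used in the previous two lemmas are needed.
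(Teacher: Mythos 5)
Your proposal is correct and follows essentially the same argument as the paper: use $C_2$ and the vertex constraint in $C_1$ to force $a_j$ to $v$ at time $t$ and $a_i$ onto the traversal $v_{pre}\to v$ between $t$ and $t+1$, then check that each of the three constraints in $C_3$ yields a conflict. The only difference is cosmetic ordering of the case analysis.
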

\begin{proof}
	We analyze constraint combinations between constraint sets $C_1$ and $C_2$. Constraint combination \{$\langle a_i, v, t\rangle$, $\langle a_j, v, t\rangle$\} can't be violated at the same time without conflict. That leaves only one combination \{$\langle a_i, v_{pre}, v, t+1\rangle$, $\langle a_j, v, t\rangle$\}, which means $a_i$ must be at $v_{pre}$ at timestep $t$ and at $v$ at timestep $t+1$, $a_j$ must be at $v$ at timestep $t$. Now we only need to prove there must be a conflict if it violates $C_3$. It can't violate $\langle a_i, v, t\rangle$ obviously. There will be a conflict $\langle a_i, a_j, v_{pre}, v, t+1\rangle$ if it violates $\langle a_j, v, v_{pre}, t + 1\rangle$, and a conflict $\langle a_i, a_j, v, t + 1\rangle$ if it violates $\langle a_j, v, t + 1\rangle$. Therefore there are no pairs of conflict-free paths for the two agents that violate all three sets of constraints.
\end{proof}
\begin{lemma}
	If twe agents have a head-on vertex conflict $\langle a_i, a_j, v, t\rangle$, then there are no pairs of conflict-free paths for the two agents that violate all four sets of constraints $C_1$ = \{$\langle a_i, v, t\rangle$, $\langle a_i, v_{pre}, v, t + 1\rangle$\}, $C_2$ = \{$\langle a_j, v, t\rangle$, $\langle a_j, v_{post}, v, t + 1\rangle$\}, $C_3$ = \{$\langle a_i, v, t\rangle$, $\langle a_j, v, v_{post}, t + 1\rangle$, $\langle a_j, v, t + 1\rangle$\} and $C_4$ = \{$\langle a_j, v, t\rangle$, $\langle a_i, v, v_{post}, t + 1\rangle$, $\langle a_i, v, t + 1\rangle$\}. 
\end{lemma}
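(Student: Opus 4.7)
The plan is to extend the case-analysis strategy used for Lemma~3 (the semi-head-on vertex case), this time symmetrically over both agents because the conflict is head-on for $a_j$ as well as $a_i$. Assume for contradiction that a pair of conflict-free paths $(p_i, p_j)$ for the two agents violates every one of $C_1, C_2, C_3, C_4$. I would first use $C_1$ and $C_2$ to pin down the agents' locations at times $t$ and $t+1$, and then use $C_3$ and $C_4$ to close out each surviving case.

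Violating $C_1$ forces one of two positional commitments on $a_i$: either $a_i$ is at $v$ at time $t$, or $a_i$ is at $v_{pre}$ at time $t$ and at $v$ at time $t+1$ (traversing the forbidden edge). The symmetric statement applies to $a_j$ via $C_2$, with $v_{post}$ in place of $v_{pre}$. This yields four positional scenarios; in two of them both agents occupy $v$ at the same timestep (either $t$ or $t+1$), which is itself a vertex conflict, contradicting the conflict-freeness hypothesis.

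The two remaining scenarios are the asymmetric ones. In scenario~(i), $a_i$ sits at $v$ at time $t$ while $a_j$ slides $v_{post} \to v$; the constraint $\langle a_i, v, t\rangle \in C_3$ is then automatically violated, so $C_3$ contributes nothing new, and I would enumerate the three ways to violate $C_4$. The case that $a_i$ stays at $v$ at time $t+1$ collides with $a_j$ there, and the case that $a_i$ traverses $v \to v_{post}$ between $t$ and $t+1$ collides head-on with $a_j$'s opposing $v_{post} \to v$ move on the same edge. Scenario~(ii), where $a_j$ sits at $v$ at time $t$ while $a_i$ slides $v_{pre} \to v$, is handled by the mirror argument using $C_3$ in place of $C_4$ and the neighbor $v_{pre}$ in place of $v_{post}$, exactly as in the proof of Lemma~3.

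The main obstacle is organisational rather than mathematical: there are four positional cases from $C_1 \times C_2$ and, for each survivor, up to three constraint choices in $C_3$ and three in $C_4$ to inspect, so the proof must be laid out carefully to avoid missing a combination. Every branch funnels into one of three geometric impossibilities — co-occupancy of $v$, opposing traversal of the edge $\{v_{pre}, v\}$, or opposing traversal of $\{v, v_{post}\}$ — so the content of the argument is really the Lemma~3 reasoning applied twice, once per agent, and then stitched together at the shared singleton $v$.
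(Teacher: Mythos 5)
Your proof is correct and follows essentially the same route as the paper's: enumerate the four $C_1\times C_2$ positional combinations, kill the two symmetric ones as immediate co-occupancy of $v$, and close the two asymmetric ones by enumerating $C_4$ (resp.\ $C_3$) exactly as in Lemma~3. The only difference is that you write out both asymmetric scenarios explicitly (implicitly reading $C_3$ with $v_{pre}$, which is what the argument requires) whereas the paper delegates one of them to the proof of Lemma~3 and only details the new $C_4$ case.
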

\begin{proof}
	 
	We analyze constraint combinations between constraint sets C1 and C2. Apart from the proof on Lemma 3, constraint combination \{$\langle a_i, v_{pre}, v, t+1\rangle$, $\langle a_j, v_{post}, v, t+1\rangle$\} can't be violated together so there is only one new constraint combination \{$\langle a_i, v, t\rangle$, $\langle a_j, v_{post}, v, t + 1\rangle$\} between two sets that we need to prove. It means that $a_i$ must be at $v$ at timestep $t$, $a_j$ must be at $v_{post}$ at timestep $t$ and at $v$ at timestep $t+1$. Now we prove that it can't violate $C_4$. It can't violate $\langle a_j, v, t\rangle$ obviously. There will be a conflict $\langle a_i, a_j, v, v_{post}, t+1\rangle$ if it violates $\langle a_i, v, v_{post}, t + 1\rangle$, and a conflict $\langle a_i, a_j, v, t + 1\rangle$ if it violates $\langle a_i, v, t + 1\rangle$. Therefore there are no pairs of conflict-free paths for the two agents that violate all four sets of constraints.
\end{proof}
 \begin{figure*}
	\centering
	\includegraphics[width=8.5cm]{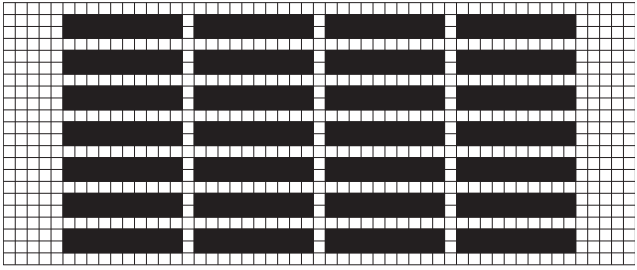} 
	\hspace{3ex}									
	\includegraphics[width=8.5cm]{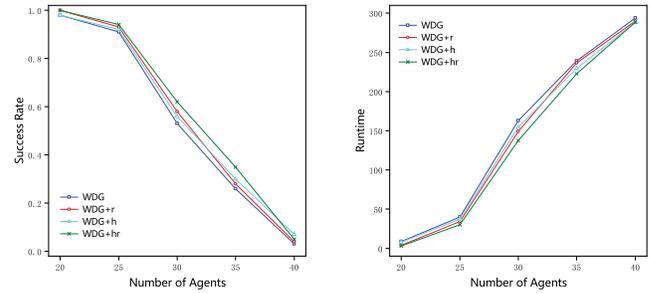} 
	
	\caption{Left: Automate warehouse map. Right: Success rate and runtime of \textsf{WDG} with different enhancements on warehouse map}
\end{figure*}
\begin{theorem}
	CBS with head-on technique is complete and optimal.
\end{theorem}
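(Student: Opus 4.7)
The plan is to mirror the standard completeness/optimality argument for CBS (and ICBS), adapted to the head-on technique's ability to produce two, three, or four children whose constraint sets are not merely the symmetric complements of one another. The key property I would establish is a \emph{covering property}: for every CT node $N$ and every conflict $c$ chosen at $N$, every conflict-free joint plan that satisfies \textit{N.constraints} also satisfies the full constraint set of at least one child of $N$. Once the covering property is in place, the usual best-first argument on \textit{N.cost} gives both completeness and optimality, since each child's cost is a lower bound on any conflict-free solution feasible below it, so the first CT node popped whose solution is conflict-free has minimum cost and no conflict-free solution is ever pruned.

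To establish the covering property I would split on the type of the chosen conflict. For non-head-on conflicts the split is identical to ICBS, so the argument is inherited directly. For each of the four head-on subcases (semi-head-on edge, head-on edge, semi-head-on vertex, head-on vertex) the covering property is exactly the content of Lemmas~1--4: each lemma shows that no conflict-free pair of paths for $a_i$ and $a_j$ can simultaneously violate all of the sets $C_1,\ldots,C_m$ assigned to the $m$ children, whose contrapositive is that every conflict-free pair satisfies the constraints of some $C_\ell$ and therefore remains feasible in child $N_\ell$. Since constraints in the children touch only $a_i$ and $a_j$, any conflict-free $k$-agent solution restricts to a conflict-free pair for those two agents, so the pairwise lemma lifts to a statement about full joint solutions. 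Combining this with the fact that a child's replanned solution cost is a valid lower bound on any conflict-free completion it admits --- replanning is optimal under the child's constraints --- completes the inductive step, and termination plus optimality follow in the standard way.

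The step I expect to be the main obstacle is the head-on vertex case, where the auxiliary children $N_{i2}$ (and $N_{j2}$ in the fully head-on case) carry constraints on \emph{both} agents at once, a departure from CBS's usual one-agent-per-child pattern. Here I would lean on the case enumeration already performed inside the proofs of Lemmas~3 and~4, which exhaust every pairing of ``who violates what'' between $C_1$ and $C_2$ and identify precisely the leftover scenarios (``$a_i$ moves $v_{pre}\to v$ while $a_j$ waits at $v$'' and its mirror image) that force the extra children. Verifying that these extra children still preserve the lower-bound property --- and hence admissibility of the high-level search --- then reduces to the observation that they only add constraints, never remove any, so replanning in them can only raise cost relative to the underlying standard split. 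Put together, these ingredients give completeness and optimality.
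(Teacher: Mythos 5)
Your proposal is correct and follows essentially the same route as the paper: both arguments invoke Lemmas 1--4 to show that no conflict-free solution is lost in any split (your ``covering property'') and then apply the standard best-first argument on CT-node cost to obtain optimality and completeness. Your write-up is somewhat more careful than the paper's --- explicitly lifting the pairwise lemmas to full $k$-agent solutions and checking that the extra children with constraints on both agents preserve the lower-bound property --- but these are elaborations of the same argument rather than a different approach.
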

\begin{proof}
	The low-level always plan an optimal path, the high-level always chooses the node with minimum $f$-value to expand. Lemma 1, Lemma 2, Lemma 3 and Lemma 4 have proven that the head-on technique keeps all the conflict-free solutions in split actions. So the first chosen CT node without conflicts has the minimum cost, which means CBS with head-on technique is optimal. The cost of the expanded node is non-decreasing and there is a finite number of CT nodes with the same cost. So if there is a solution without conflicts that cost $c$, before all node with a solution that cost smaller than $c+1$ been expanded, the solution must be found. i.e. CBS with head-on technique is complete.
\end{proof}

Like both children nodes' cost will be increased by at least one, head-on conflicts guarantee the cost increased by at least two. Because we resolve the original and the foreseeable conflicts in one split, the children nodes' costs are increased by at least one twice. Such property can be used in heuristics calculation of CT nodes.  

\section{Experiment}

In this section, we compare the performance of CBSH with and without the head-on technique on the benchmark map, randomly generated small grids and warehouse maps. And also we compare the head-on technique with a similar CBS enhancement --- rectangle reasoning technique. \textsf{WDG}+r (\textsf{WDG} with rectangle reasoning technique) is the state-of-the-art CBS-based solver in the previous research. All the test cases(agents' start and goal locations) in this section are generated with random numbers, for each map and number of agents we generate 100 instances. The head-on technique is implemented with the identification technique mentioned in section 3. The experiments are conducted on a 1.90 GHz Intel Core i7-8650U laptop with 16 GB RAM with 5 minutes runtime limit.

\subsection{Benchmark Maps}

Figure 4 shows the success rates and runtimes of \textsf{WDG} and \textsf{CG} with different enhancements on benchmark grid map \textit{lak503d}\cite{Sturtevant2012Benchmarks}, which is a 194$\times$194 grid map. In Figure 4, ``+h'' means with the head-on technique, ``+r'' means with the rectangle reasoning technique. Both enhancements increase the success rate of \textsf{WDG}. The head-on technique's performance is better than the rectangle reasoning technique on \textsf{WDG}. The rectangle reasoning technique works better with \textsf{CG} then the number of agents is small and then the head-on technique take over after agents number is larger than 40. Both algorithms perform even better with two enhancements together. 

Table 1 presents the runtime and CT node expanded of the cases solved by both \textsf{WDG} + r and \textsf{WDG} + hr. With the head-on technique the nodes expanded is reduced by more than a half, and so is the runtime. Just as we intended to improve the performance by reducing nodes expanded and the depth of CT.
\subsection{Small Maps}

Figure 5 shows the success rates and runtimes of \textsf{WDG} and \textsf{CG} with different enhancements on small maps. The map is 20$\times$20 with 30$\%$ of randomly generated obstacles. The rectangle technique compromised on \textsf{WDG} while the head-on technique slightly improves the performance. The two enhancements' efficiencies are roughly the same on \textsf{CG} .

\begin{table}[]
	\resizebox{8.5cm}{1.1cm}{
	\begin{tabular}{cccccc}
		\hline
		\multicolumn{1}{l}{\multirow{2}{*}{Agents}} & \multicolumn{1}{l}{\multirow{2}{*}{Cases}} & \multicolumn{2}{c}{WDG+r}                               & \multicolumn{2}{c}{WDG+hr}                              \\ \cline{3-6} 
		\multicolumn{1}{l}{}                        & \multicolumn{1}{l}{}                       & \multicolumn{1}{l}{Runtime} & \multicolumn{1}{l}{Nodes} & \multicolumn{1}{l}{Runtime} & \multicolumn{1}{l}{Nodes} \\ \hline
		30                                          & 99                                         & 5.26                        & 384.71                    & 2.226                       & 127.89                    \\ \hline
		40                                          & 91                                         & 19.38                       & 1417.99                   & 8.88                        & 422.21                    \\ \hline
		50                                          & 65                                         & 41.04                       & 2672.66                   & 19.81                       & 929.72                    \\ \hline
	\end{tabular}
}
\caption{Results on \textit{lak503d}. The first column ``Agents''
	shows the number of agents. ``Cases'' indicates the number of instances solved by both \textsf{WDG}+r and \textsf{WDG}+hr in 5 minutes. The rest columns show the average runtime and CT nodes expanded in those cases.}
\end{table}

\subsection{Warehouse Maps}

 Figure 6 presents the warehouse map we used in experiments. Warehouse maps means to simulate how the automated warehousing works. All test cases move an agent from the left five columns to the right or the other way around. Like the result on the benchmark map, both enhancements improve the performance of \textsf{WDG} and the rectangle reasoning technique is more effective while the number of agents is small. As the number of agents gets larger the head-on technique works batter and beat \textsf{WDG}+hr on success rate when there are 40 agents.

\subsection{Inefficiency on Empty Maps}
We also ran a test on a 20$\times$20 empty map and the head-on technique offers nearly no improvements on the results. Such inefficiency is predictable because agents are more flexible which means it rather hard to form a head-on conflict. Therefore the head-on technique works poorly on maps with little obstacles.
 \section{Conclution and Future Work}

In this paper we identify head-on conflicts from cardinal conflicts, then we demonstrate that the way CBS split on head-on conflicts can be improved. So we introduce more efficient split methods for different types of head-on conflicts. Experimental results report larger improvements than the rectangle reasoning technique on CBSH-WDG and CBSH-CG. Besides that algorithms with the two enhancements together work even better.

We suggest the following future research directions:
(1) Better way of resolving head-on conflicts. For example, use positive constraints \cite{li2019disjoint} to reduce the search space of additional children CT nodes' subtrees; (2) computing better heuristics with head-on conflicts; and (3) apply the head-on technique to suboptimal MAPF solvers.
In experiments, splitting head-on vertex conflicts without additional nodes report great improvement in both success rate and runtime.

\bibliographystyle{named}
\bibliography{ijcai20}

\begin{thebibliography}{}

\bibitem[\protect\citeauthoryear{Barer \bgroup \em et al.\egroup
  }{2014}]{barer2014suboptimal}
Max Barer, Guni Sharon, Roni Stern, and Ariel Felner.
\newblock Suboptimal variants of the conflict-based search algorithm for the
  multi-agent pathfinding problem.
\newblock In {\em Seventh Annual Symposium on Combinatorial Search}, 2014.

\bibitem[\protect\citeauthoryear{Boyarski \bgroup \em et al.\egroup
  }{2015}]{boyarski2015icbs}
Eli Boyarski, Ariel Felner, Roni Stern, Guni Sharon, David Tolpin, Oded
  Betzalel, and Eyal Shimony.
\newblock Icbs: improved conflict-based search algorithm for multi-agent
  pathfinding.
\newblock In {\em Twenty-Fourth International Joint Conference on Artificial
  Intelligence}, pages 740--746, 2015.

\bibitem[\protect\citeauthoryear{Cohen \bgroup \em et al.\egroup
  }{2016}]{Cohen2016Improved}
Liron Cohen, Tansel Uras, T.~K.~Satish Kumar, Hong Xu, Nora Ayanian, and Sven
  Koenig.
\newblock Improved solvers for bounded-suboptimal multi-agent path finding.
\newblock 2016.

\bibitem[\protect\citeauthoryear{Felner \bgroup \em et al.\egroup
  }{2017}]{felner2017search}
Ariel Felner, Roni Stern, Solomon~Eyal Shimony, Eli Boyarski, Meir Goldenberg,
  Guni Sharon, Nathan Sturtevant, Glenn Wagner, and Pavel Surynek.
\newblock Search-based optimal solvers for the multi-agent pathfinding problem:
  Summary and challenges.
\newblock In {\em Tenth Annual Symposium on Combinatorial Search}, pages
  20--37, 2017.

\bibitem[\protect\citeauthoryear{Felner \bgroup \em et al.\egroup
  }{2018}]{felner2018adding}
Ariel Felner, Jiaoyang Li, Eli Boyarski, Hang Ma, Liron Cohen, TK~Satish Kumar,
  and Sven Koenig.
\newblock Adding heuristics to conflict-based search for multi-agent path
  finding.
\newblock In {\em Twenty-Eighth International Conference on Automated Planning
  and Scheduling}, 2018.

\bibitem[\protect\citeauthoryear{Li \bgroup \em et al.\egroup
  }{2019a}]{li2019improved}
Jiaoyang Li, Ariel Felner, Eli Boyarski, Hang Ma, and Sven Koenig.
\newblock Improved heuristics for multi-agent path finding with conflict-based
  search.
\newblock In {\em Proceedings of the 28th International Joint Conference on
  Artificial Intelligence}, pages 442--449. AAAI Press, 2019.

\bibitem[\protect\citeauthoryear{Li \bgroup \em et al.\egroup
  }{2019b}]{li2019disjoint}
Jiaoyang Li, Daniel Harabor, Peter~J Stuckey, Hang Ma, and Sven Koenig.
\newblock Disjoint splitting for multi-agent path finding with conflict-based
  search.
\newblock In {\em Proceedings of the International Conference on Automated
  Planning and Scheduling}, volume~29, pages 279--283, 2019.

\bibitem[\protect\citeauthoryear{Li \bgroup \em et al.\egroup
  }{2019c}]{li2019symmetry}
Jiaoyang Li, Daniel Harabor, Peter~J Stuckey, Hang Ma, and Sven Koenig.
\newblock Symmetry breaking constraints for grid-based multi-agent path
  finding.
\newblock In {\em Proceedings of the National Conference on Artificial
  Intelligence, Honolulu, HI, USA}, volume~27, 2019.

\bibitem[\protect\citeauthoryear{Sharon \bgroup \em et al.\egroup
  }{2013}]{sharon2013increasing}
Guni Sharon, Roni Stern, Meir Goldenberg, and Ariel Felner.
\newblock The increasing cost tree search for optimal multi-agent pathfinding.
\newblock {\em Artificial Intelligence}, 195:470--495, 2013.

\bibitem[\protect\citeauthoryear{Sharon \bgroup \em et al.\egroup
  }{2015}]{sharon2015conflict}
Guni Sharon, Roni Stern, Ariel Felner, and Nathan~R Sturtevant.
\newblock Conflict-based search for optimal multi-agent pathfinding.
\newblock {\em Artificial Intelligence}, 219:40--66, 2015.

\bibitem[\protect\citeauthoryear{Sturtevant}{2012}]{Sturtevant2012Benchmarks}
Nathan~R. Sturtevant.
\newblock Benchmarks for grid-based pathfinding.
\newblock {\em IEEE Transactions on Computational Intelligence \& Ai in Games},
  4(2):144--148, 2012.

\bibitem[\protect\citeauthoryear{Yu and LaValle}{2013}]{yu2013structure}
Jingjin Yu and Steven~M LaValle.
\newblock Structure and intractability of optimal multi-robot path planning on
  graphs.
\newblock In {\em Twenty-Seventh AAAI Conference on Artificial Intelligence},
  2013.

\bibitem[\protect\citeauthoryear{Yu}{2015}]{yu2015intractability}
Jingjin Yu.
\newblock Intractability of optimal multirobot path planning on planar graphs.
\newblock {\em IEEE Robotics and Automation Letters}, 1(1):33--40, 2015.

\end{thebibliography}

\end{document}